\DeclarePairedDelimiter{\abs}{\lvert}{\rvert}
\DeclarePairedDelimiter\norm{\lVert}{\rVert}%
\begin{document}
\copyrightyear{2018}
\acmYear{2018} 
\setcopyright{iw3c2w3}
\acmConference[WWW 2018]{The 2018 Web Conference}{April 23--27, 2018}{Lyon, France}
\acmBooktitle{WWW 2018: The 2018 Web Conference, April 23--27, 2018, Lyon, France}
\acmPrice{}
\acmDOI{10.1145/3178876.3186154}
\acmISBN{978-1-4503-5639-8}

\title{Latent Relational Metric Learning via Memory-based Attention for Collaborative Ranking}

\author{Yi Tay}
\affiliation{%
  \institution{Nanyang Technological University Singapore}
}
\email{ytay017@e.ntu.edu.sg}

\author{Luu Anh Tuan}
\affiliation{%
 \institution{Institute for Infocomm Research Singapore}
}
\email{at.luu@i2r.a-star.edu.sg}

\author{Siu Cheung Hui}
\affiliation{%
  \institution{Nanyang Technological University Singapore}
}
\email{asschui@ntu.edu.sg}

\begin{abstract}
This paper proposes a new neural architecture for collaborative ranking with implicit feedback. Our model, LRML (\textit{Latent Relational Metric Learning}) is a novel metric learning approach for recommendation. More specifically, instead of simple push-pull mechanisms between user and item pairs, we propose to learn latent relations that describe each user item interaction. This helps to alleviate the potential geometric inflexibility of existing metric learing approaches. This enables not only better performance but also a greater extent of modeling capability, allowing our model to scale to a larger number of interactions. In order to do so, we employ a augmented memory module and learn to attend over these memory blocks to construct latent relations. The memory-based attention module is controlled by the user-item interaction, making the learned relation vector specific to each user-item pair. Hence, this can be interpreted as learning an exclusive and optimal relational translation for each user-item interaction. The proposed architecture demonstrates the state-of-the-art performance across multiple recommendation benchmarks. LRML outperforms other metric learning models by $6\%-7.5\%$ in terms of Hits@10 and nDCG@10 on large datasets such as Netflix and MovieLens20M. Moreover, qualitative studies also demonstrate evidence that our proposed model is able to infer and encode explicit sentiment, temporal and attribute information despite being only trained on implicit feedback. As such, this ascertains the ability of LRML to uncover hidden relational structure within implicit datasets. 

\end{abstract}


\keywords{Collaborative Filtering; Recommender Systems; Neural Networks}

\maketitle

\section{Introduction}

The modern age is a world of information overload. The explosion of information, also referred to as the era of big data, is a huge motivator for the research and development of practical recommender systems. Generally, the key problem that these systems are aiming to solve is the inevitable conundrum of `too much content, too little time' that is commonly faced by users. After all, there are easily million of movies, thousands of songs and hundreds of books to choose from at any given time. An effective recommender system ameliorates this problem by delivering the most relevant content to the user. 

Our work is targeted at recommender systems that operate on implicit data (e.g., clicks, likes, bookmarks) and are known as collaborative filtering (CF) systems \cite{DBLP:conf/www/SarwarKKR01}. In this setting, Matrix Factorization (MF) remains as one of the most popular baselines which has inspired a considerable number of variations \cite{DBLP:conf/sigir/HeZKC16,DBLP:conf/uai/RendleFGS09,He:2017:NCF:3038912.3052569,DBLP:conf/kdd/Koren08}. The general idea of MF is as follows: Users and items are represented as a matrix and subsequently factorized into latent components which can also be interpreted as modeling the relationships between users and items using the inner product. As such, this allows missing values to be inferred which provides an approximate solution to the recommendation problem.

Recently, Hseih et al. \cite{DBLP:conf/www/HsiehYCLBE17} revealed the potential implications pertaining to the usage of inner product to model user-item relationships. Their argument is constructed upon the fact that inner product violates the triangle inequality which is essential to model the fine-grained preferences of users. Instead, the authors proposed a metric-based learning scheme that minimizes the distance between user and item vectors ($p$ and $q$) of positive interactions. Simultaneously, this also learns user-user similarity and item-item similarity in vector space. As evidence to their assertions, their proposed algorithm, the collaborative metric learning (CML) algorithm \cite{DBLP:conf/www/HsiehYCLBE17} demonstrates highly competitive performance on many benchmark datasets. 

Despite the success of CML, it faces several weaknesses. Firstly, the scoring function of CML is clearly \textit{geometrically restrictive}. Given a user-item interaction, CML tries to fit the pair into the same point in vector space. Considering the \textit{many-to-many} nature of the collaborative ranking problem, enforcing a good fit in vector space can be really challenging from a geometric perspective especially since the optimal point of each user and item is now a single point in vector space. Intuitively, this tries to fit a user and all his interacted items onto the same point, i.e., geometrically congestive and inflexible. While it is possible to learn user-user and item-item similarity clusters, this comes at the expense of precision and accuracy in ranking problems especially pertaining to large datasets whereby there can be millions of interactions. Secondly and by taking a more theoretically grounded angle, CML is an \textit{ill-posed algebraic system} \cite{illposed} which further reinforces and aggravates the problem of geometric inflexibility. A proof and more details are described in the related work section. 

In this work, we propose a flexible and adaptive metric learning algorithm for collaborative filtering and ranking. Our model, LRML (Latent Relational Metric Learning) learns adaptive relation vectors between user and item interactions, finding an optimal translation vector between each interaction pair. Needless to say, our work is highly inspired by recent advances in NLP which include the highly celebrated word embeddings \cite{DBLP:conf/nips/MikolovSCCD13} and knowledge graph embeddings \cite{DBLP:conf/nips/BordesUGWY13,DBLP:conf/aaai/LinLSLZ15,DBLP:conf/aaai/TayLH17} which popularized the concept of semantic translation in vector space. In our proposed approach, we assume that there exist a latent relational structure within the implicit interaction data and therefore, we aim to model the latent relationships between users and items by inducing relation vectors. 

Overall, our key intuition can be described as follows: For each user and item interaction, we learn a vector $r$ that explains this relationship, i.e., the relation vector $r$ connects the user vector to the item vector. Ideally, this vector $r$ should capture the hidden semantics between each implicit interaction and is learned over an auxiliary memory module via a neural attention mechanism. The auxiliary memory module can be interpreted as a memory store of concepts in which, upon linear combination, constructs a relation vector. The content addressing of this memory module is user and item dependent, which ensures sufficient flexibility in geometric space. Apart from the clear benefits of an interpretable attention module, LRML can also be considered as an improvement to the CML algorithm \cite{DBLP:conf/www/HsiehYCLBE17}. Our approach solves the geometric inflexibility problem by means of adaptive (user-item specific) translations in vector space. This allows for a greater extent of flexibility and modeling capability in metric space which enables our model to scale to larger datasets with easily millions of interactions. 

\subsection{Our Contributions}

Motivated by the success of deep learning, both generally and in the field of recommender systems, our ideas are materialized in the form of a neural network architecture that leverages the recent advancements of neural attention mechanisms and augmented memory modules \cite{DBLP:conf/nips/SukhbaatarSWF15}. Overall, the prime contributions of this paper are:

\begin{itemize}
\item We present \textsc{LRML} (Latent Relational Metric Learning), a novel, end-to-end neural network architecture for collaborative filtering and ranking on implicit interaction data. For the first time, we adopt \textit{user and item specific} latent relation vectors to model the relationship between user-item interactions. 
\item We propose a novel \emph{Latent Relational Attentive Memory} (LRAM) module in order to generate the latent relation vectors. The LRAM module provides improvements in terms of flexibility and modeling capability of the algorithm. Moreover, the neural attention also gives greater insight and interpretability of the model. 
\item We evaluate our proposed \textsc{LRML} on \textbf{ten} publicly available benchmark datasets. This includes large, web-scale datasets like Netflix Prize and MovieLens20M. Our proposed approach demonstrates highly competitive results on all datasets, outperforming not only CML but many other strong baselines such as NeuMF \cite{He:2017:NCF:3038912.3052569}. Moreover, on large datasets, we obtain $6\%-7.5\%$ gain in performance over CML and other models.   
\item We performed extensive qualitative analysis. Upon inspection of the attention weights, our proposed \textsc{LRML} is capable of inferring explicit information such as ratings (e.g., 1-5 stars), temporal and item attribute information despite being only trained on implicit binary data. This ascertains the capability of \textsc{LRML} in unraveling hidden latent structure within seemingly non-relational datasets. 
\end{itemize}

\section{Background}
Our work is concerned with collaborative filtering\footnote{In this paper, we use the terms collaborative filtering and collaborative ranking interchangeably.} with implicit feedback. We first formulate the problem and discuss the existing algorithms that are aimed at solving this problem. Then, we elaborate on the potential weaknesses of the collaborative metric learning algorithm.

\subsection{Implicit Collaborative Filtering}
The task of implicit collaborative filtering is concerned with learning via implicit interaction data, e.g., clicks, bookmarks, likes, etc. Let \textbf{P} be the set of all users and \textbf{Q} be the set of all items. The problem of implicit CF can be described as follows:
\begin{equation}
y_{ui} =
\begin{cases}
1 \:, \text{if interaction $<$user,item$>$ exists} \\
0 \: , \text{otherwise}\\
\end{cases}
 \end{equation}
where $\textbf{Y} \in \mathbb{R}^{\abs{P} \times \abs{Q}}$ is the user-item interaction matrix. Implicit CF models the interaction of users and items and on that note, it is good to bear in mind that a value of $0$ does not necessarily imply negative feedback. In most cases, the user is unaware of the existence of the item which forms the cornerstone of the recommendation problem, i.e., estimating the scores of the unobserved entries in \textbf{Y}. Across the past decade, Matrix Factorization (MF) techniques are highly popular algorithms for collaborative filtering and have spurred on a huge number of variations \cite{He:2017:NCF:3038912.3052569,DBLP:conf/sigir/HeZKC16}. Since MF does not belong to the core focus of our work, we omit the technical descriptions of MF for the sake of brevity and refer interested readers to \cite{He:2017:NCF:3038912.3052569,DBLP:conf/www/HsiehYCLBE17} for more details. 

\subsection{Collaborative Metric Learning (CML)}
\label{sec:cml}
CML \cite{DBLP:conf/www/HsiehYCLBE17} is a recently incepted algorithm for CF and has, despite its simplicity, demonstrated highly competitive performance on several benchmarks. The key intuition is that CML operates in metric space, i.e., it minimizes the distance between each user-item interaction in Euclidean space. The scoring function of CML is defined as:
\begin{equation}
s(p,q) = \norm{\:p-q\:}^{2}_{2}
\end{equation}
where $p,q$ are the user and item vectors respectively. CML learns via a pairwise hinge loss, which is reminiscent of the Bayesian Personalized Ranking (BPR) \cite{DBLP:conf/uai/RendleFGS09}. CML obeys the triangle inequality which, according to the authors, is a prerequisite for fine-grained fitting of users and items in vector space. 

CML, however, is not without flaws. As mentioned, the scoring function of CML is \textit{geometrically restrictive} since the objective function tries to fit each user-item pair into the same point in vector space. Unfortunately, this intrinsic geometric inflexibility causes adverse repercussions when the dataset is large or dense since CML tries to force all of a user's item interactions onto the same point. Secondly and by taking a more theoretically grounded angle, we show that CML is an \textit{ill-posed algebraic system} \cite{illposed} which further reinforces and aggravates the problem of geometric inflexibility. The following proof elaborates on this issue. 

\begin{theorem}
The objective function of CML: $s(p,q) = \norm{p-q}^{2}_{2}$ can be considered as an ill-posed algebraic system when there is a large number of interactions. 
\end{theorem}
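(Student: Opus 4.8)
The plan is to read CML's objective at its global optimum as a homogeneous linear system over the stacked user/item embeddings and then show that, once the interaction set is large, this system is massively over-determined and admits only degenerate solutions, so that no admissible (rank-discriminative) solution exists --- the defining feature of an ill-posed problem. Concretely, since $s(p_u,q_i)=\norm{p_u-q_i}_2^2\ge 0$ with equality iff $p_u=q_i$, driving every positive interaction's cost to its infimum is exactly solving the algebraic system $\{\,p_u-q_i=\mathbf 0 \;:\; y_{ui}=1\,\}$ in the unknown $z=(p_1,\dots,p_{\abs{P}},q_1,\dots,q_{\abs{Q}})\in\mathbb R^{d(\abs{P}+\abs{Q})}$, where $d$ is the embedding dimension. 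Writing $N=\abs{\{(u,i):y_{ui}=1\}}$, this is $dN$ scalar equations in $d(\abs{P}+\abs{Q})$ unknowns; the regime ``a large number of interactions'' is precisely $N\gg \abs{P}+\abs{Q}$, in which the system is heavily over-determined.

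Next I would pin down the exact solution set. Stacking the equations gives $(B\otimes I_d)\,z=\mathbf 0$, where $B$ is the signed vertex--edge incidence matrix of the bipartite interaction graph $G=(P\cup Q,\,E)$ with $E=\{(u,i):y_{ui}=1\}$. Using the standard fact $\operatorname{rank}(B)=\abs{P}+\abs{Q}-c$ with $c$ the number of connected components of $G$, the exact-solution space has dimension $dc$ and is spanned by assignments that give a single common vector to each component. Hence any exact global optimum of CML forces $p_u=q_i$ throughout every connected component of $G$.

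Finally I would invoke the large-$N$ behavior: as interactions accumulate, $G$ becomes connected (a single giant component), so $c=1$ and the only exact solutions are the fully collapsed ones, $p_1=\dots=p_{\abs{P}}=q_1=\dots=q_{\abs{Q}}=c_0$ for an arbitrary $c_0\in\mathbb R^d$. But then $s(p,q)\equiv 0$ for \emph{all} pairs, positive or not, so such a ``solution'' cannot separate items and is useless for ranking, while any configuration that does separate items violates the system. Thus the algebraic system is over-determined and possesses no solution meeting the problem's implicit non-degeneracy requirement, i.e., it is ill-posed. Relaxing $p_u=q_i$ to a least-squares fit does not rescue well-posedness, since $\sum_{(u,i)\in E}\norm{p_u-q_i}_2^2$ is still minimized only by the collapsed configuration; CML's negative-sampling hinge term is exactly what must fight this collapse, but it cannot change the fact that the underlying system is ill-posed.

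The step I expect to be the main obstacle is making ``ill-posed'' precise: the homogeneous system is trivially consistent (the zero vector solves it), so the claim is not mere inconsistency. The care is in showing that \emph{every} exact solution is a degenerate collapse once $G$ is connected, and then arguing that the model's implicit requirement --- distinct items must receive distinct, separable representations so they can be ranked --- renders the effective system inconsistent. Establishing the connectivity / giant-component claim for realistic interaction counts, and formalizing the non-degeneracy constraint so that the incidence-matrix rank computation cleanly yields the ``no admissible solution'' conclusion, is where the real work lies.
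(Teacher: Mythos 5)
Your proposal is correct and in fact subsumes the paper's argument: the paper's proof is exactly your first step and nothing more --- it regards each positive interaction as the vector equation $p-q=0$, counts $N\times d$ scalar equations against $(\abs{P}+\abs{Q})\times d$ unknowns, and declares the system ill-posed because $N \ggg \abs{P}+\abs{Q}$ in practice. Where you go further is precisely where you (rightly) locate the weak point of that counting argument: the system is homogeneous, hence always consistent, so over-determination alone does not deliver ill-posedness in any strict sense. Your incidence-matrix analysis --- $(B\otimes I_d)z=\mathbf{0}$, $\operatorname{rank}(B)=\abs{P}+\abs{Q}-c$, exact solutions collapse each connected component of the interaction graph to a single point, and under a giant component everything collapses so the score is identically zero and cannot rank --- gives a sharper and more honest statement of what goes wrong, at the price of an extra (realistic but unproved in general) connectivity assumption and a formalization of the non-degeneracy requirement. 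The paper's version buys brevity and matches the informal phrasing of the theorem ("can be considered ill-posed"); yours buys an exact characterization of the degenerate solution set and a precise reason why no admissible solution exists, which is closer to the Hadamard notion of ill-posedness the claim gestures at.
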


\begin{proof}
Let $d$ be the dimensions of vectors $p$ and $q$. From an algebraic perspective, each user-item interaction can be regarded as the equation $p-q=0$. By considering $p_i - q_i$, where $i$ is the index of the vectors $p$ and $q$, the number of equations for each interaction is $d$. Let $N$ be the total number of interactions, the total number of equations is therefore $N \times d$. On the other hand, the number of free variables is only $(\abs{P} + \abs{Q}) \times d$. Since $N \ggg d(\abs{P} + \abs{Q})$ in most settings, 
CML is an ill-posed algebraic system. 
\end{proof}

 Since it is not uncommon for implicit recommendation datasets to contain millions of interactions while having significantly much lesser unique items and users, we can consider, from a mathematical perspective, that CML proposes an ill-posed algebraic system. This introduces instability when training and optimizing the objective function of CML. 

\subsection{Translating in Vector Space}

Our proposed approach, \textsc{LRML}, ameliorates the flaws of CML by means of adaptive translation. Since our adaptive translation is learned as a weighted representation (over an augmented memory via neural attention), this introduces an extremely large number of possibilities for the user and item vectors to be translated in vector space. More specifically, the attention vector (learned via a softmax function) learns a continious weighted representation of the augmented memory. As such, this significantly expands the flexibility of the metric learning algorithm. In LRML, the user vector is now adaptively translated based on the target item (and vice versa). As such, this allows \textsc{LRML} to avoid the above-mentioned flaws of CML, and enables more precise and fine-grained fitting in vector space. 

Translating in vector space takes inspiration from NLP and in particular, reasoning over semantics (knowledge graphs). In this area, a highly seminal work by Bordes et al. (TransE) \cite{DBLP:conf/nips/BordesUGWY13} proposed translations in vector space to model the relationships between entities in a knowledge graph. Word embeddings \cite{DBLP:conf/nips/MikolovSCCD13} are also known to exhibit \textit{semantic translation} in vector space whereby the relationships between two words can be explained by a relation vector. The domain of CF that models users and items, and represents them as an interaction matrix is highly related to graph and network embeddings \cite{DBLP:conf/kdd/PerozziAS14,DBLP:conf/www/TangQWZYM15,Ma:2018:MNE:3159652.3159680}. 

To the best of our knowledge, our work is the first work that extends the 2D structure of user-item CF into a 3D structure by assuming a latent relational (3D) structure. Intuitively, this can be also interpreted as inducing a latent knowledge graph from the user-item interaction graph. 

Figure \ref{fig:vector} depicts the key difference between \textsc{LRML} and CML - while CML tries to place user and item into the same spot in vector space, \textsc{LRML} learns to fit user and item with adaptive, trainable latent vectors. 
 More specifically, \textsc{LRML} learns an optimal translation between each user-item interaction. Recall in Section \ref{sec:cml}, we have previously established that CML suffers from instability (due to being an ill-posed algebraic system) along with geometric inflexibility, i.e., the push-pull effects from too many interactions. In order to alleviate this weakness, our proposed approach adopts attentive and adaptive user-item specific translations that benefit from the vast number of possibilities of learning weighted (linearly combined) representations.

 Finally, we note that another translation-based recommendation model, TransRec was recently proposed by He et al. \cite{He:2017:TR:3109859.3109882} in which the authors proposed to use translations to model sequential data. While TransRec also utilizes the translation principle, LRML is a completely different model. Firstly, TransRec learns translations for sequential recommendation, e..g, the 2nd item a user interacts with is represented by a translation of the first. Secondly, the overall goals of LRML is different, i.e., LRML utilizes translations for flexible and adaptive metric learning. Thirdly, LRML uses neural attention to learn latent relations, which is also an feature that is absent in TransRec.

 \begin{figure}[H]
\centering
\begin{subfigure}{0.25\textwidth}
  \centering
  \includegraphics[width=0.9\linewidth]{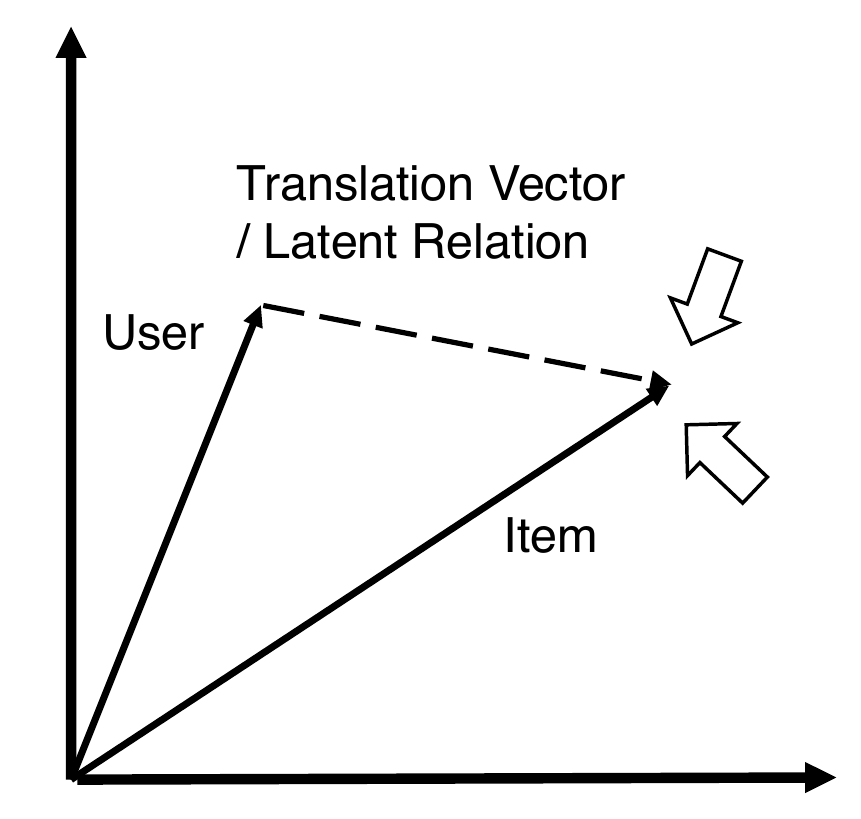}

  \caption{\textsc{LRML}}

  \label{fig:sub1}
\end{subfigure}%
\begin{subfigure}{0.25\textwidth}
  \centering
  \includegraphics[width=0.9\linewidth]{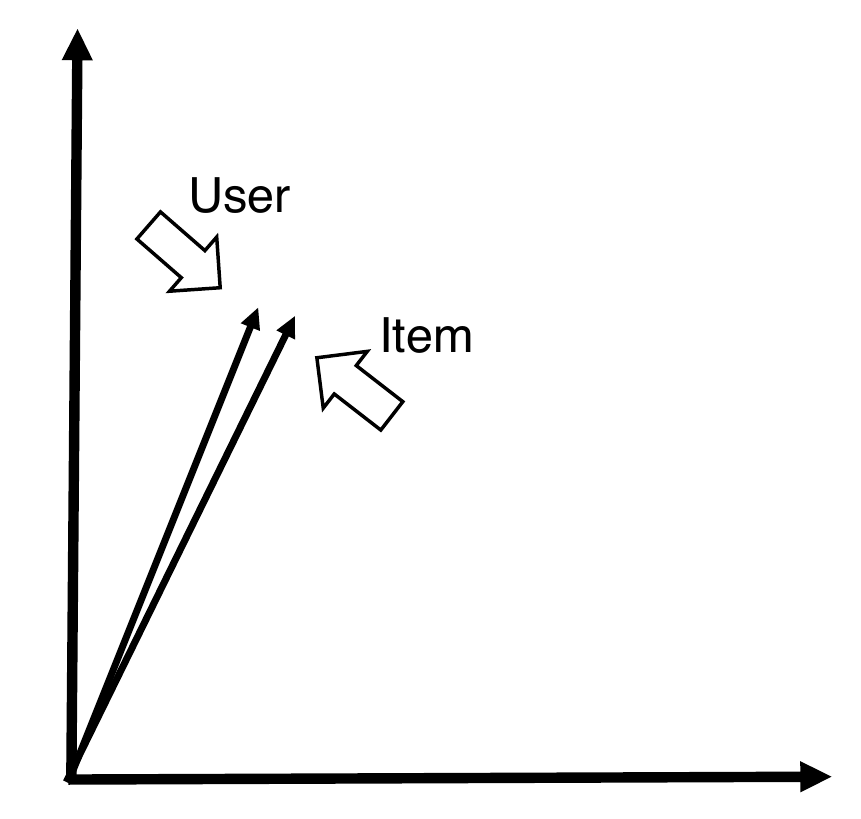}
    \caption{CML}
  \label{fig:sub2}
\end{subfigure}
\caption{Geometric Comparisons of Latent Relational Metric Learning (LRML) and CML (Collaborative Metric Learning) for Modeling User-Item Relationships in Metric Space. }
\label{fig:vector}
\end{figure}

\subsection{Deep Learning}
In this section, we provide some preliminaries about deep learning for recommendation. 
\subsubsection{Deep Learning for Recommendation}
 In the recent years we can easily observe the emerging numbers of neural network models that have been designed for a diverse range of recommendation tasks. Notably, Recurrent neural networks \cite{DBLP:conf/wsdm/WuABSJ17} and convolutional neural networks \cite{Tang:2018:PTS:3159652.3159656} have been exploited for sequence aware recommendations. There is also an emerging line of work focusing on representation learning using reviews, e.g., Deep Co-operative Networks (DeepCoNN) \cite{DBLP:conf/wsdm/ZhengNY17}. A recent work, the Multi-Pointer Co-Attention Networks \cite{DBLP:journals/corr/abs-1801-09251} is the state-of-the-art review-based CF model that uses pointer-based attention for representation learning. Autoencoder based models \cite{DBLP:conf/kdd/LiS17,Zhang:2017:AEH:3077136.3080689} have also been proposed for CF. In the more closely related domain of collaborative filtering on implicit feedback, Neural Matrix Factorization (NeuMF) \cite{He:2017:NCF:3038912.3052569} is a recent state-of-the-art deep learning model that learns the interaction function between user and item using deep neural networks. NeuMF is a combined framework that concatenates the inner-product-based MF with a multi-layered perceptron (MLP). A comprehensive review can be found at \cite{zhang2017deep}.

 \subsubsection{Neural Attention} 
 Our work borrows inspiration from the recent advances in deep learning. Specifically, \textsc{LRML} uses a neural attention mechanism over an augmented memory module to generate latent vectors. Neural attention mechanisms are popular in the fields of computer vision \cite{DBLP:conf/nips/MnihHGK14,DBLP:conf/icml/XuBKCCSZB15} and NLP \cite{rocktaschel2015reasoning,luong2015effective,1712.05403,DBLP:conf/cikm/PhanSTHL17,DBLP:journals/corr/abs-1801-00102} and are known to improve performance and interpretability of deep learning models. The key idea of attention is to learn a weighted representation across multiple samples (or embeddings), reducing noise and selecting more informative features for the final prediction. Attention operates using a \textit{softmax} function, which converts the attention vector into a probability distribution. Subsequently, this vector is then used to learn a weighted sum of a sequence of vectors. 

 Notably, attention mechanisms have been also recently adopted for collaborative filtering problems particularly for content-based recommendations such as multimedia recommendation \cite{DBLP:conf/sigir/ChenZ0NLC17}. However, the novelty of our model lies in the difference whereby our model adopts neural attention to generate latent relation vectors over an augmented memory module. This is fundamentally different from content-based attention models which learn to attend over features and learn to predict. While the key idea of attentive selection is similar, the goal of our model is to find hidden relational structure by leveraging attention mechanisms. Moreover, the inner mechanism of our proposed LRAM is highly reminiscent of end-to-end memory networks \cite{DBLP:conf/nips/SukhbaatarSWF15} and key-value memory networks \cite{miller2016key} which are the competitive models for question answering, machine comprehension and aspect-aware sentiment analysis \cite{DBLP:conf/cikm/TayTH17}.

\section{Our Proposed Model}

In this section, we introduce \textsc{LRML}, our novel deep learning recommendation architecture. The overall model architecture is described in Figure \ref{fig:transrec}. \textsc{LRML} aims to model user and item pairs using relation vectors. This is what we refer to as the translational principle, i.e., $p+r \approx q$. Note that the relation vector $r$ is what separates our model from simple metric learning approaches like CML which operate via $p \approx q$. Let us begin with a simple high-level overview of our model:
\begin{enumerate}
\item Users and items are converted to dense vector representations using an Embedding Layer (a look-up layer). $p$ and $q$ are the user and item vectors respectively. 
\item Given $p$ and $q$, a relation vector $r$ is generated using a neural attention mechanism over an augmented memory matrix $\textbf{M}$. The relation vector, $r$, is a weighted representation over a trainable LRAM module. $r$ is dependent on user and item, and is learned to best explain the relationship between user and item. 
\item Our model optimizes for $\norm{\: p+r-q \:} \approx 0$ using pairwise ranking (hinge loss) and negative sampling. 
\end{enumerate}

\begin{figure}[ht]
\begin{center}
\includegraphics[width=0.46\textwidth]{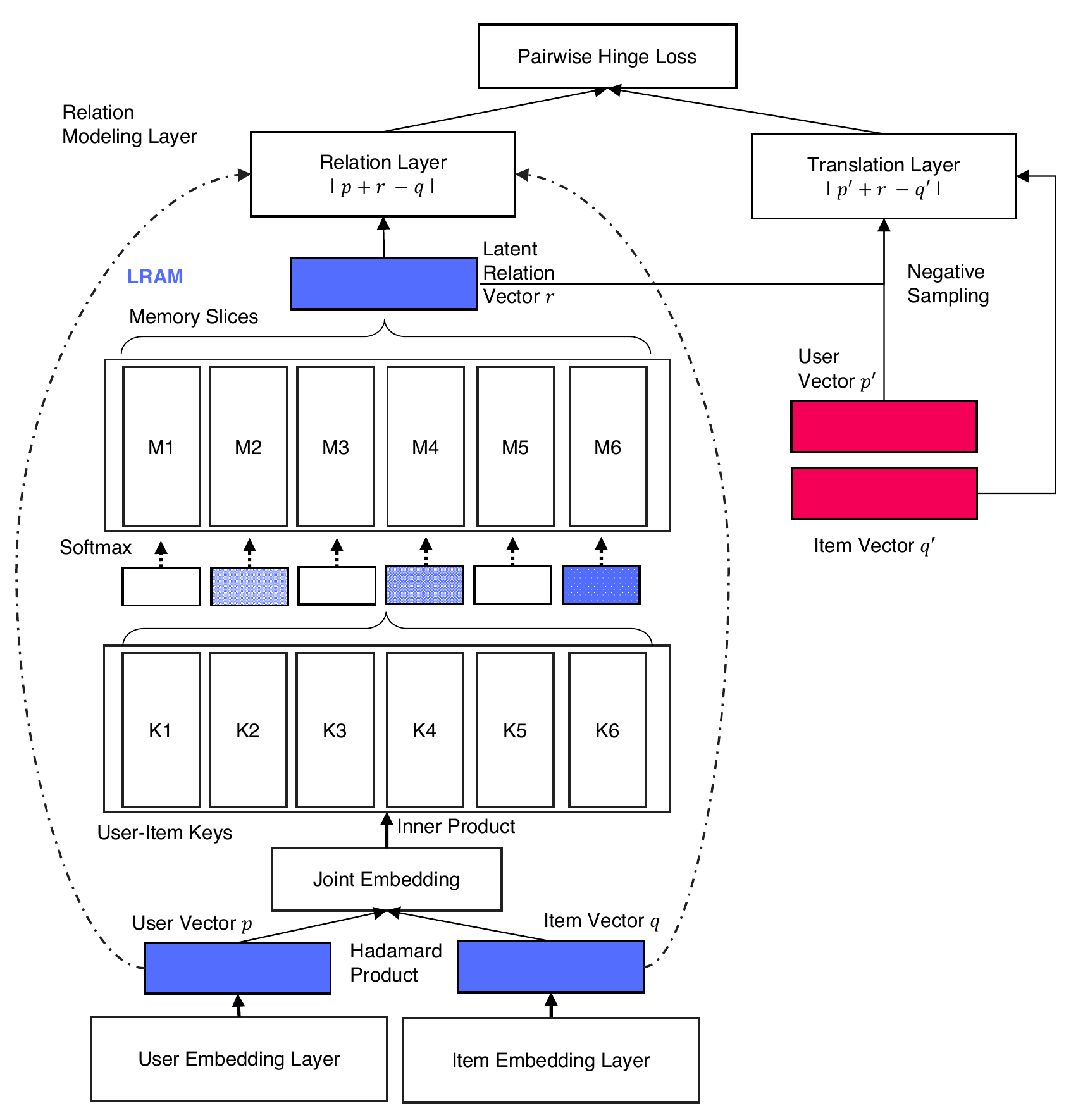}
\caption{Illustration of our proposed \textsc{LRML} architecture, an end-to-end differentiable neural architecture. LRML is characterized by its key-addressed LRAM module which learns user-item specific relation vectors. The size of the memory \textbf{N}=6 slices in this example.}
\label{fig:transrec}
\end{center}
\end{figure}

\subsection{Embedding Layer}
\textsc{LRML} accepts a user-item pair \emph{$<$ user,item $>$} as an input. Inputs of users and items are 
represented as vectors encoded via one-hot encoding corresponding to a unique index key belonging to each user and item. At the 
embedding layer, this one-hot encoded vector is converted into a low-dimensional real-valued dense vector representation. 
In order to do so, this one hot vector is multiplied with the embedding matrices $\textbf{P} \in \mathbb{R}^{d \times |U|}$
and $\textbf{Q} \in \mathbb{R}^{d \times |I|}$ which store the user and item embeddings respectively. $d$ is the dimensionality of
the user and item embeddings while $|U|$ and $|I|$ are the total number of users and items respectively. The output of this layer is a pair
of embeddings $<\vec{p},\vec{q}>$ which are the user and item embeddings respectively.

\subsection{LRAM - Latent Relational Attentive Memory Module}
One of the primary goals of our \textsc{LRML} model is to induce latent relations between user-item pairs. However, explicit semantic relations between user-item pairs are not available in traditional CF. As such, we introduce the \emph{Latent Relational Attentive Memory} (LRAM) module. The LRAM module is a centralized memory store in which latent relations are built upon. The memory matrix of the LRAM module is represented as $\textbf{M} \in \mathbb{R}^{N \times d}$ where $d$ is the dimensionality of the user-item embeddings and $N$ is a user-specified hyperparameter that controls the expressiveness and capacity of the LRAM module. In matrix \textbf{M}, we refer to each row slice $m_{i} \in \mathbb{R}^{d}$ as a memory slice. The input to LRAM is a user-item pair $<p,q>$. The LRAM module returns the vector $r$ of equal dimensionality as p and q. 

\subsubsection{Joint User-Item Embedding}
Given the user-item pair, $<\vec{p},\vec{q}>$, the LRAM module first applies the following steps to learn a joint embedding of users and items:
\begin{equation}
s = p \: \odot \: q
\end{equation}
where $\odot$ is simply the Hadamard product (or element-wise multiplication). The generated vector $s \in \mathbb{R}^{d}$ is of the same dimension of $p$ and $q$. Note that while other functions such as the multi-layered perceptron $MLP(p,q)$ are also viable, we found that a simple Hadamard product performs better. 

\subsubsection{User-Item Key Addressing}
Next, using the joint user-item embedding, we aim to learn an attention vector $a$. The attention vector is learned from $\textbf{K} \in \mathbb{R}^{N \times d}$ which we refer to as the key matrix. Each element of the attention vector $a$ can be defined as:
\begin{equation}
a_i = s^{T} k_i
\end{equation}
where $k_i \in \textbf{K} \in \mathbb{R}^{N \times d}$ and the generated vector $a \in \mathbb{R}^{d}$ is of the same dimensions of $p$, $q$ and $s$. In order to normalize $a$ to a probability distribution, we can simply use the Softmax function:

\begin{equation}
 Softmax(a_i) = \frac{e^{a_i}}{\sum_{j}e^{a_j}}.
\end{equation}
Since our attention mechanism utilizes the softmax function, it ensures that our network is end-to-end differentiable. 
\subsubsection{Generating Latent Relations via Memory-based Attention}
Finally, in order to generate the latent relation vector $r$, we use the attention vector $a$ to generate a weighted representation of $\textbf{M}$, i.e., adaptively selecting relevant pieces of information from the memory matrix $\textbf{M}$.
\begin{equation}
r = \sum_{i} a_i m_i
\end{equation}
The output of the LRAM module is a user and item specific latent relation vector $r$. The latent relation vector is a weighted representation of $\textbf{M}$. Intuitively, the memory matrix $\textbf{M}$ can be interpreted as a store of conceptual building blocks that can be used to describe the relationships between users and items. The mechanism design of the LRAM module is inspired by Memory Networks and can also be interpreted as neural attentions which give our model improved interpretability. Note that the LRAM module is part of \textsc{LRML} and is trained end-to-end. Finally, the total number of parameters added by the LRAM module is merely $2 \times N \times d$ parameters and since typically we set $N<100$ in our experiments, the parameter cost incurred by the LRAM module is negligible.   
\subsection{Optimization and Learning}
In this section, we introduce the final layer of the network, the objective function and the regularization employed in our training scheme. \textsc{LRML} is end-to-end differentiable since it utilizes soft attention over the LRAM module. As such, we are able to simply train it via stochastic gradient descent (SGD) methods. 
\subsubsection{Relational Modeling Layer}
For each user-item pair $p$ and $q$, the scoring function is defined as:
\begin{equation}
s(p,q) = || \: p + r - q \: ||^{2}_{2} 
\end{equation}
where $r$ is the latent relation vector constructed from the LRAM module and $||.||^{2}_{2}$ is essentially the L2 norm of the vector $p+r-q$. Intuitively, this score function penalizes any deviation of (p+r) from the vector q.
\subsubsection{Objective Function}
\textsc{LRML} adopts the pairwise ranking loss or hinge loss for optimization. For each positive user-item pair $<p,q>$, we sample a corrupted pair which we denote as $<p',q'>$. Similar to the positive example, the corrupted pair of user and item goes through the same user and item embedding layer respectively. The pairwise ranking / hinge loss is defined as follows:
\begin{equation}
L =  \sum_{(p,q) \in \Delta} \sum_{(p',q') \not\in \Delta} max(0,s(p,q) + \lambda - s(p',q'))
\end{equation}
where $\Delta$ is the set of all user-item pairs, $\lambda$ is the margin that separates the golden pairs and corrupted samples.$\: max(0,x)$ is also known as the \textit{relu} function. Note that we use the same (generated) latent relation vector for the negative example. This is motivated by our early empirical results whereby performance was much better over generating a separate relation vector for the negative example. 
\subsubsection{Regularization}
Finally, we apply regularization by normalizing all user and item embeddings to be constrained within the Euclidean ball. At the end of each mini-batch, we apply a constraint of $\norm{p_*}_2 \leq 1$ and $\norm{q_*}_2 \leq 1$ for regularization and preventing overfitting. In order to enforce this, we can manually project all embeddings to the unit ball either at the beginning or after each training iteration.

\section{Performance Evaluation}
In this section, we evaluate our proposed \textsc{LRML} against other state-of-the-art algorithms. Our experimental evaluation is designed to answer several research questions (\textbf{RQs}).
\begin{itemize}
\item \textbf{RQ1:} Does LRML outperform other baselines and state-of-the-art methods for collaborative ranking? 
\item \textbf{RQ2:} How does the relative performance of LRML and CML differ across different dataset sizes?
\item \textbf{RQ3:} What is the scalability and runtime of LRML compared to other baselines?
\item \textbf{RQ4:} What is the LRAM module learning? Are we able to derive qualitative insights about the inner workings of LRML?
\item \textbf{RQ5:} What do the relation vectors represent? Are they meaningful?
\end{itemize}
\subsection{Datasets}

In the spirit of experimental rigor, we conduct our evaluation across a wide spectrum of datasets. 


\begin{itemize}
\item \textbf{Netflix Prize} - Since the entire Netflix Prize dataset is extremely large, we construct a subset of the famous Netflix Prize dataset. Specifially, we only considered movie-item ratings from the year 2005 and filtered users who had less than $100$ interactions. 
\item \textbf{MovieLens} - A widely adopted benchmark dataset\footnote{https://grouplens.org/datasets/movielens/} for collaborative filtering in the application domain of recommending movies to users. Specifically, we use two configurations of this benchmark dataset, namely MovieLens1M and MovieLens20M \cite{DBLP:journals/tiis/HarperK16}. 
\item \textbf{IMDb} - A movie recommendation dataset obtained from IMDb that was introduced in \cite{DBLP:conf/kdd/DiaoQWSJW14}.
\item \textbf{LastFM} - This dataset contains social networking, tagging, and music artist listening information 
    from Last.fm online music system\footnote{http://last.fm}. 
\item \textbf{Books} - This is a book recommendation dataset that was used in \cite{DBLP:conf/www/ZieglerMKL05}. 
\item \textbf{Delicious} - This dataset contains social networking, bookmarking, and tagging information 
    from a set of 2K users from Delicious Social Bookmarking System\footnote{http://www.delicious.com}. This dataset, along with the lastFM dataset, originated from the Hetrec 2011 workshop\footnote{https://grouplens.org/datasets/hetrec-2011/}.
\item \textbf{Meetup} - An event-based social network\footnote{https://www.meetup.com/}. We use the datasets provided by \cite{DBLP:conf/icde/PhamLCZ15} which include event-user pairs from NYC.
\item \textbf{Twitter} - This is a check-in dataset constructed by \cite{DBLP:conf/kdd/YuanCMSM13} which contains users and their check-ins. There are two subsets of this dataset, namely Twitter (WW) and Twitter (USA).  
\end{itemize}
In total, we evaluate our proposed algorithm on \textbf{ten} different datasets with diverse sizes and interaction densities, i.e., the percentage of non-zero values in the user-item interaction matrix. For all datasets, with the exception of the Netflix Prize dataset, we ensured that each user has at least 20 interactions. The statistics of all datasets are reported in Table \ref{tab:dataset}.

\begin{table}[htbp]
  \centering
    \begin{tabular}{c|cccc}
    \hline
    \textbf{Dataset} & \textbf{Interactions} & \textbf{$\#$Users} & \textbf{$\#$Items} & \textbf{$\%$ Density} \\
    \hline
    Netflix Prize & 44M   & 75K   & 13K   & 4.5 \\
    MovieLens20M & 16M   & 53K   & 27K   & 1.1 \\
    MovieLens1M & 1M    & 6K    & 4K    & 4.2 \\
    IMDb  & 117K  & 0.8K  & 114K  & 0.13 \\
    LastFM & 92K   & 1.9K  & 175K  & 0.28 \\
    Books & 285K  & 7.4K  & 291K  & 0.01 \\
    Delicious & 43K   & 1.7K  & 69K   & 0.36 \\
    Meetup & 11K   & 2.6K  & 16K   & 0.26 \\
    Twitter (WW) & 18K   & 4K    & 36K   & 0.01 \\
    Twitter (USA) & 171K  & 4K    & 36K   & 0.12 \\
    \hline
    \end{tabular}%
      \caption{Statistics of all datasets used in our experimental evaluation. }
  \label{tab:dataset}%
\end{table}%

\begin{table*}[htbp]
  \centering
  \small
    \begin{tabular}{c|cccccccccc|cc}
    \midrule
          & \multicolumn{2}{c}{BPR} & \multicolumn{2}{c}{MLP} & \multicolumn{2}{c}{MF} & \multicolumn{2}{c}{NEUMF}
           & \multicolumn{2}{c}{CML} & \multicolumn{2}{c}{\textsc{LRML}} \\
           \midrule
          & H@10  & nDCG@10 & H@10  & nDCG@10 & H@10  & nDCG@10 & H@10  & nDCG@10 & H@10  & nDCG@10 & H@10  & nDCG@10 \\
          \midrule
    Netflix & \underline{48.67} & \underline{31.97} & 33.77 & 22.34 & 47.07 & 30.25 & 32.27 & 22.59 & 46.12 & 29.48 & \textbf{53.71} & \textbf{35.78} \\
    MovieLens20M & 69.68 & 46.68 & 75.81 & \underline{54.38} & 72.98 & 49.01 & 75.82 & 54.37 & \underline{77.64} & 53.01 & \textbf{84.47} & \textbf{61.52} \\
    MovieLens1M & \underline{72.37} & 53.33 & 48.59 & 33.11 & 68.87 & 49.17 & 68.61 & 50.65 & 72.16 & \underline{54.13} & \textbf{73.97} & \textbf{54.53} \\
    IMDb  & 4.62  & 4.23  & 4.11  & 3.79  & 5.26  & 4.89  & 4.87  & 4.55  & \underline{9.47}  & \underline{7.16}  & \textbf{11.92} & \textbf{8.45} \\
    LastFM & \underline{20.73} & \underline{13.58} & 7.36  & 3.75  & 18.17 & 12.02 & 14.89 & 9.61  & 19.75 & 12.03 & \textbf{21.71} & \textbf{14.38} \\
    Books & 22.07 & 16.13 & 12.89 & 10.03 & 15.61 & 10.75 & 12.54 & 7.65  & \underline{25.86} & \underline{18.70} & \textbf{26.72} & \textbf{19.43} \\
    Delicious & 78.50 & 77.78 & 77.05 & 73.80 & 78.91 & 78.09 & 78.79 & 78.11 & \underline{79.31} & \underline{78.43} & \textbf{80.31} & \textbf{79.01} \\
    Meetup & 44.91 & 36.08 & 31.33 & 23.19 & \underline{47.23} & \underline{38.29} & 32.76 & 25.79 & 47.04 & 36.64 & \textbf{50.19} & \textbf{40.48} \\
    Twitter (WW) & 76.39 & 75.27 & 53.33 & 35.68 & \underline{76.93} & 75.43 & \underline{76.66} & 74.86 & 75.86 & 74.72 & \textbf{78.92} & \textbf{77.17} \\
    Twitter (USA) & 75.88 & 75.04 & 77.91 & 76.23 & 76.47 & 75.62 & 70.75 & 69.79 & \underline{78.30} & \underline{76.50} & \textbf{79.36} & \textbf{77.85} \\
    \midrule
    \end{tabular}%
    \caption{Experimental results on ten benchmark datasets. Best performance is in boldface and second best is underlined. \textsc{LRML} achieves best performance on all datasets, outperforming many strong neural baselines. Improvement is much larger on large datasets such as Netflix Prize or MovieLens20M. }
  \label{tab:exp_results}%
\end{table*}%

\subsection{Baselines}
In this section, we introduce the key baselines for comparison against our proposed \textsc{LRML}.
\begin{itemize}

\item \textbf{Bayesian Personalized Ranking} (BPR) \cite{DBLP:conf/uai/RendleFGS09} is a strong CF baseline that minimizes $\sum_{i} \sum_{j,k} -\log \sigma (p_i^T q_j - p_i^T q_k)$ + $\lambda_v \norm{u_i}^2 + \lambda_q \norm{q_j}^2$, where $(p_i,q_j)$ is a positive interaction and $(p_i, q_k)$ is a negative sample.

\item \textbf{Matrix Factorization} (MF) is a standard baseline for CF that models the relationship between user and item using inner products. We use the generalized version from \cite{He:2017:NCF:3038912.3052569} which scores user item pairs with $s(p,q) = \sigma(h^{T} (p \odot q))$. 
\item \textbf{Multi-layered Perceptron} (MLP) is the baseline neural architecture proposed in \cite{He:2017:NCF:3038912.3052569} in which the authors proposed to use multiple layers of nonlinearities to model the relationships between users and items. 
\item \textbf{Neural Matrix Factorization} (NeuMF) \cite{He:2017:NCF:3038912.3052569} is the state-of-the-art unified framework combining MF with MLP. NeuMF concatenates the output of MF and MLP, and uses a regression layer to predict the user item rating. Note that NeuMF uses separate embedding representations of users and items for MF and MLP. 
\item \textbf{Collaborative Metric Learning (CML)} \cite{DBLP:conf/www/HsiehYCLBE17} can be considered as the baseline of our model which does not include relational translations between user and item vectors. 
\end{itemize}
Since CML and NeuMF have surpassed many other baselines such as WMF \cite{DBLP:conf/icdm/HuKV08}, eALS \cite{DBLP:conf/sigir/HeZKC16} and Factorization Machines \cite{DBLP:conf/icdm/Rendle10}, we do not further report them.  Additionally, for fair comparison and due to scalability issues, we do not use WARP \cite{DBLP:journals/ml/WestonBU10} for both CML and \textsc{LRML}.

\subsection{Evaluation Protocol and Metrics}
Our evaluation protocol follows He et al. \cite{He:2017:NCF:3038912.3052569} very closely. Similarly, we adopt the \textit{leave-one-out} evaluation protocol, i.e., the testing set comprises the last item of all users. If there are no timestamps available in the dataset (e.g., \textit{Delicious} and \textit{LastFM}), then the test sample is randomly sampled. A single item from each user is also sampled to form the development set. Since it is too time consuming to rank all items for every user, we randomly sampled $100$ items that have no interactions with the target user and ranked the test item with respect to these $100$ items. This is in concert with many works  \cite{DBLP:conf/www/BayerHKR17,DBLP:conf/sigir/HeZKC16,DBLP:conf/uai/RendleFGS09,He:2017:NCF:3038912.3052569}. Since our problem is essentially formulated as learning-to-rank, we judge the performance of our model based on the popular and widely adopted standard metrics used in information retrieval and recommender systems: \textbf{normalized discounted cumulative gain} (nDCG@10) \cite{DBLP:journals/tois/JarvelinK02} and \textbf{Hit Ratio} (H@10). Intuitively, the nDCG@10 metric is a position-aware ranking metric while H@10 metric simply considers whether the ground truth is ranked amongst the top 10 items. For more detailed explanations, we refer readers to \cite{He:2017:NCF:3038912.3052569}.

\subsection{Implementation Details}
We implemented all models in \textit{TensorFlow\footnote{https://www.tensorflow.org/}} on a Linux machine. For tuning the hyperparameters, we select the model that performs best on the development set based on the nDCG metric and report the result of that model on the test set. Model parameters are saved every $50$ epochs. All models are trained until convergence, i.e., if the performance (nDCG metric) on the development set does not improve after $50$ epochs. Models are trained for a maximum of 500 epochs. For large datasets like \textit{MovieLens20M} and \textit{Netflix Prize}, we stop the training at 100 epochs. The dimensionality of user and item embeddings $d$ is tuned amongst $\{20, 50,100\}$. The number of batches $B$ is tuned amongst $\{10,100,1000\}$. The minimum number of batches for NetflixPrize and MovieLens20M is $100$ in order to fit into the GPU RAM. We optimize all models using the Adam optimizer \cite{DBLP:journals/corr/KingmaB14}. The learning rate for all models are tuned amongst $\{0.01,0.005,0.001\}$. For models that minimize the hinge loss, the margin $\lambda$ is tuned amongst $\{0.1,0.2,0.25,0.5\}$. For NeuMF and MLP models, we follow the configuration and architecture proposed in He et al. \cite{He:2017:NCF:3038912.3052569}, i.e., 3 fully-connected layers with a pyramid architecture. However, for fair comparison of all models, we do not use pretrained MLP and MF models in the NeuMF model since this effectively acts as an ensemble classifier. For \textsc{LRML}, the number of memory slices in $\textbf{M}$ is tuned amongst $N=\{5,10,20,25,50,100\}$. For simplicity, each training instance is paired with only a single negative sample. All embeddings and parameters are normally initialized with a standard deviation of $0.01$.

For most datasets and baselines, we found that the following hyperparameters work well: learning rate$=0.001$, number of batches $B=10$ and $\lambda=0.2$. A larger embedding size always performs better, i.e., $d=100$. The size of LRAM is dataset dependent. We found that setting $N=20$ works well for most datasets (performance does not degrade going beyond 50 but does not improve either). However, we found that setting $N=100$ works better on large datasets such as \textit{Netflix Prize} and \textit{MovieLens20M}.  

\subsection{Experimental Results}

The empirical results of our proposed model and baselines on 10 benchmark datasets are reported in Table \ref{tab:exp_results}. Our proposed \textsc{LRML} performs extremely competitively on all datasets and obtain the best performance on both nDCG@10 and H@10 metrics on \textbf{all} datasets. This answers \textbf{RQ1}, showing that our proposed \textsc{LRML} is capable of effective collaborative ranking. Moreover, the ranking of many of the competitor baselines is fluctuating across datasets as we see the second best performance is scattered amongst different models. 

\subsubsection{Comparison against CML}
In general, \textsc{LRML} outperforms CML on all datasets on both H@10 and nDCG@10 metric. We would like to draw the reader's attention to the two datasets, namely \textit{Netflix Prize} and \textit{MovieLens20M} datasets in which \textsc{LRML} obtained a clear margin in performance gain over the competitor models. This ascertains our earlier claim about the flaws of CML (not being able to scale to large datasets) and empirically proves the advantages of our proposed approach. Specifically, \textsc{LRML} outperforms CML by performance gains about $7.5\%$ on \textit{MovieLens20M} and about $6\%$ on \textit{Netflix Prize} on the nDCG@10 metric. The performance gains on the hit ratio (H@10) metric is also similarly high. When the dataset is smaller, the performance gains are less distinct. For example, the performance gain in MovieLens20M is much larger than in MovieLens1M. The performance gains on smaller datasets range from a marginal $1\%-2\%$, (e.g., \textit{Books} and \textit{Delicious}) to reasonably large, e.g., $3\%-4\%$ on the \textit{Meetup} or \textit{Twitter (WW)} datasets. As such, the concluding findings pertaining to the comparison of \textsc{LRML} and CML can be drawn as follows: On large datasets, the performance gain of \textsc{LRML} over CML is large. However, on smaller datasets, \textsc{LRML} at least performs equally well or sometimes reasonably better. This answers \textbf{RQ2} on the effect of dataset size on relative performance of LRML and CML. Our experimental evidence shows that our proposed \textsc{LRML} is effective and ascertains our usage of adaptive translations in metric learning. 

\subsubsection{Comparison against Other Baselines}
Pertaining to the performance of the other baselines, we found that the performance of MF and BPR is extremely competitive, i.e., both MF and BPR outperform CML on several datasets. The performance of MLP, on the other hand, seem to perform reasonably well only on \textit{MovieLens20M} and performs horribly on most datasets. Note that we also tried a non-pyramid architecture but that did not improve the performance.  The performance of the model NeuMF (that combines MLPs with MF) is often better than vanilla MLP but falls short of MF in most cases. Notably, NeuMF performs reasonably well on \textit{MovieLens20M}, \textit{Netflix Prize} and \textit{MovieLens1M}. This could possibly mean that the usage of dual embedding spaces (one for MF and one for MLP) might be overfitting on the smaller datasets. 

\subsubsection{Comparison on Runtime}
Figure \ref{fig:runtime} reports the runtime (seconds taken to run a single epoch) of all models on \textit{Netflix Prize} and \textit{MovieLens20M}. We make several observations. First, the difference in runtime between \textsc{LRML} and CML is quite insignificant, i.e., \textsc{LRML} only spends $\approx 10s-15s$ extra per epoch which is only a $5\%-10\%$ increase in runtime on both datasets. On the other hand, it is still faster than models such as NeuMF and MLP. Notably, this is also contributed by the fact that MLP and NeuMF are point-wise models which do not pair negative samples with positive samples during training. Next, we also compare the runtime of \textsc{LRML} with different $N$ (LRAM size) values and found that there is only minimal observable difference in runtime with $N=50$ or $N=100$. This was probably made insignificant by the highly optimized GPU operations and also due to the fact that the size of the matrix-vector operations in LRAM is relatively small. To answer \textbf{RQ3}, we have shown that \textsc{LRML} only incurs a slight computation cost over CML. 

\begin{figure}[ht]
\begin{center}
\includegraphics[width=0.3\textwidth]{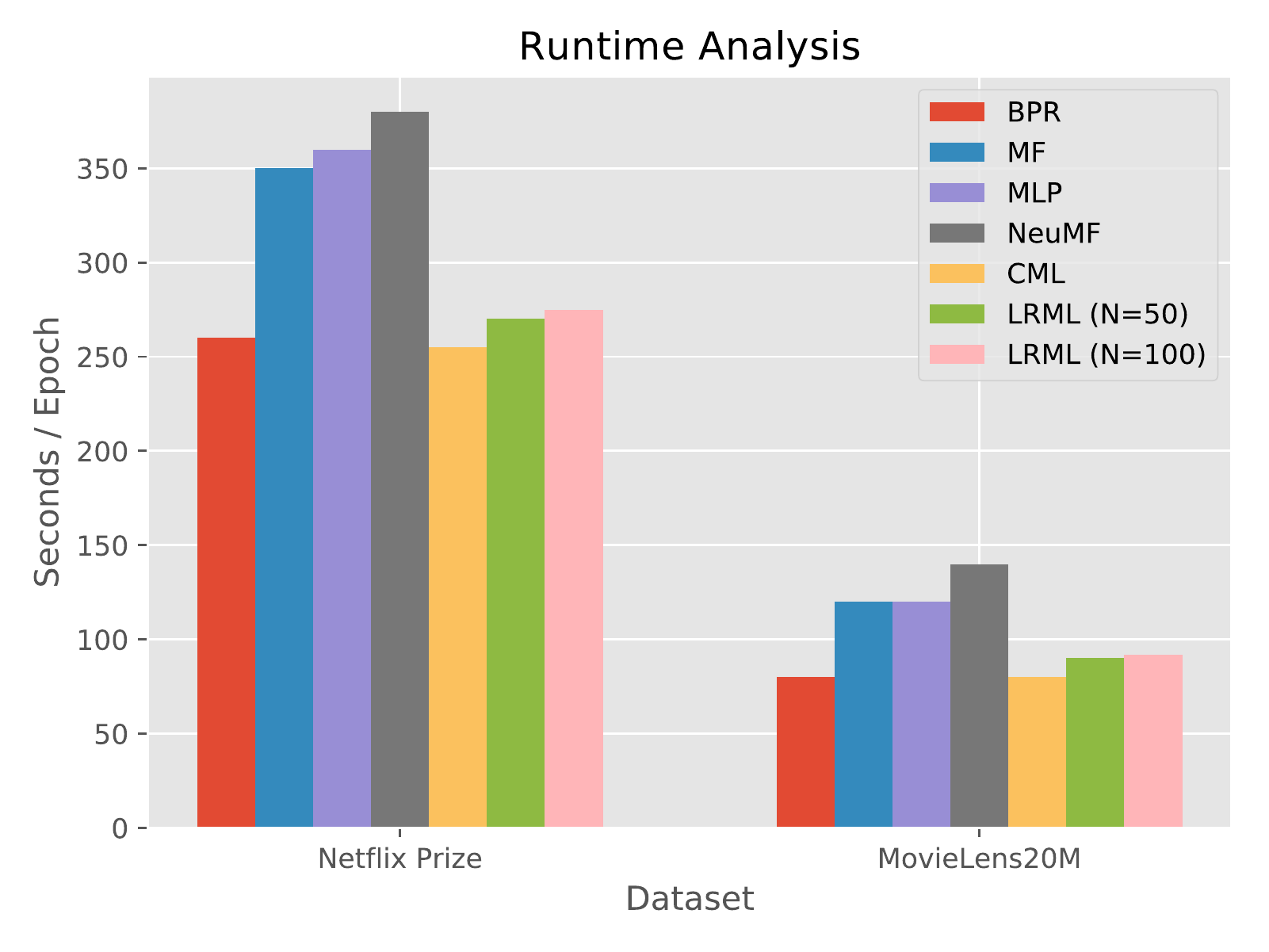}
\caption{Runtime (seconds/epoch) of all models on Netflix Prize and MovieLens20M. Experiments were run with batch size of 100 on a Nvidia P100 GPU. LRML only incurs a small computational cost over CML. \textit{(Best viewed in color.)}}
\label{fig:runtime}
\end{center}
\end{figure}

\section{Discussion and Analysis}
In this section, we derive qualitative insights regarding our proposed model. This section describes the discoveries that we have made while trying to understand and gain some intuition behind the performance of \textsc{LRML}.

\subsection{RQ4: What is the LRAM module learning?}
A key advantage to neural attention mechanisms is an improved interpretability since we are able to visualize the weighted importance of each memory slice with respect to any given attribute value $v$. This helps us to understand how the model is learning. Specifically, we investigate the attributes of \textit{explicit rating information} and \textit{explicit temporal information}, and show empirically via visualisation that the LRAM model learns to encode these attributes. Note that both attributes are not provided to our model at training time. In this experiment, the following steps were taken:
\begin{enumerate}
\item First, we categorized all user-item pairs $(p,q)$ according to the target attribute value $v$.
\item Using $(p,q)$ as an input, we generated the attention vector $a$ for each user-item pair. Recall that this attention vector\footnote{It is good to note that, at initialization, this attention vector looks at all memory slices \textit{almost} equally irregardless of $v$.}  is a probability distribution that depicts how much the model is looking at each memory slice of the LRAM module. 
\item For each attribute class $c_i \in v$, we take the mean attention vector for all user-item pairs in the category. 
\item We visualise the mean attribute vector of each attribute class to observe the correlation between attribute class and which memory slice \textsc{LRML} is looking at. 
\end{enumerate}

\subsubsection{LRAM Encodes Explicit Rating Information}
On datasets like MovieLens1M, explicit ratings (1-5 stars) exist but are not provided to \textsc{LRML}. Surprisingly, we empirically discovered that, despite being only trained on implicit interactions, explicit rating information is actually being encoded in LRAM. Figure \ref{attention_rating} shows the mean attention vector (i.e., $a$) for each rating class (1-5).

\begin{figure}[ht]
\begin{center}
\includegraphics[width=0.3\textwidth]{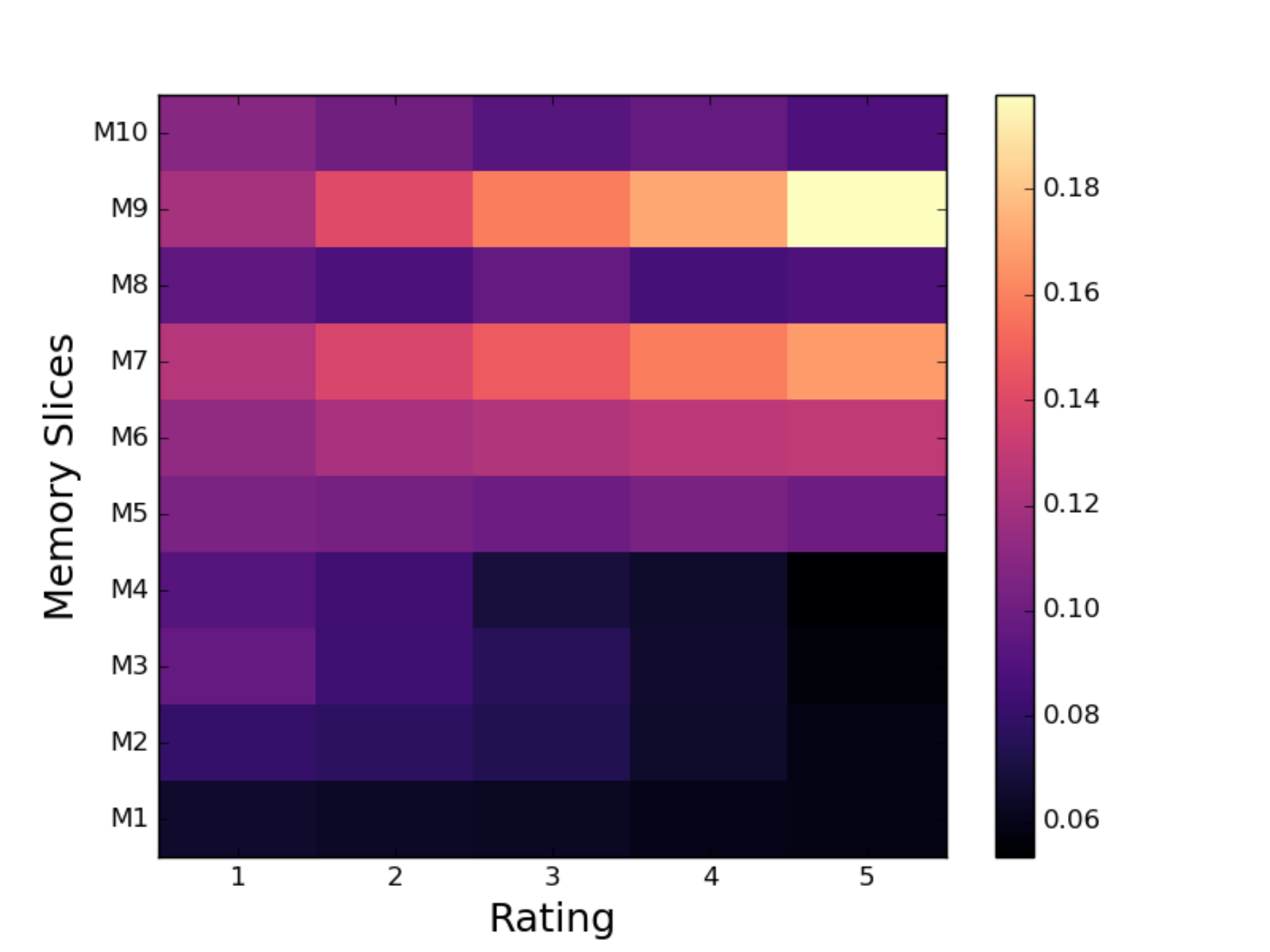}
\caption{Attention weights over LRAM for user-item pairs of different ratings on MovieLens1M. \textsc{LRML} is able to model explicit rating information
despite being only trained on implicit data. \textit{(Best viewed in color.)} }
\label{attention_rating}
\end{center}
\end{figure}

 The color scale represents the strength of the attention weights and each column of Figure \ref{attention_rating} represents the mean attention vector for each rating class. As such, we are able to observe patterns and trends across different ratings by looking at the rows (from left to right). For example, the mean attention vector of rating=1 is the first vertical slice in Figure \ref{attention_rating} and the intensity denoted by the color scale represents the attention weights. 

 Clearly, we observe that there is a pattern between the explicit rating score and the memory slice in which \textsc{LRML} is looking at. We observe that slices M2-M4 are mostly associated with bad ratings (1-2 stars) while having a high attention weight over M6, M7 and M9 signifies a good rating (4-5 stars). Moreover, there is a correlation between how much the model looks at M6, M7 and M9 and the explicit rating score. As such, it seems we are able to infer explicit rating scores solely based on how much our model is looking at each memory slice. 

 We believe this can be explained as follows: The goal of \textsc{LRML} is to find a latent relational structure between the user and item interactions. As such, while \textsc{LRML} is trying to assign relations between users and items via neural attention, it has learned to identify and model explicit rating sentiment from the implicit structure of the dataset.

\subsubsection{LRAM encodes temporal information}
The second discovery is that the LRAM module actually encodes temporal information. Similar to ratings, timestamps are available on the MovieLens1M dataset but are not used to train the model. To facilitate clear visualization, we binned the timestamps into 10 separate bins in ascending order. Figure \ref{fig:time} shows the visualized attention weights of LRAM with respect to time. Similar to explicit rating scores, we notice that certain memory slices model the chronological order of user-item interactions. On M8, we see that the intensity of the attention weights increase along with time, i.e., by viewing the row M8 from left to right, we can observe an increasing attention weight on M8 based on the intensity scale. Moreover, the converse is true for M6 which when observed from left to right, it decreases in intensity instead. In short, there is a clear pattern in which we can quite safely ascertain that LRAM has learned to encode temporal information. Once again, it is worthy to note that \textsc{LRML} was not given any temporal information to begin with. 

\begin{figure}[ht]
\begin{center}
\includegraphics[width=0.3\textwidth]{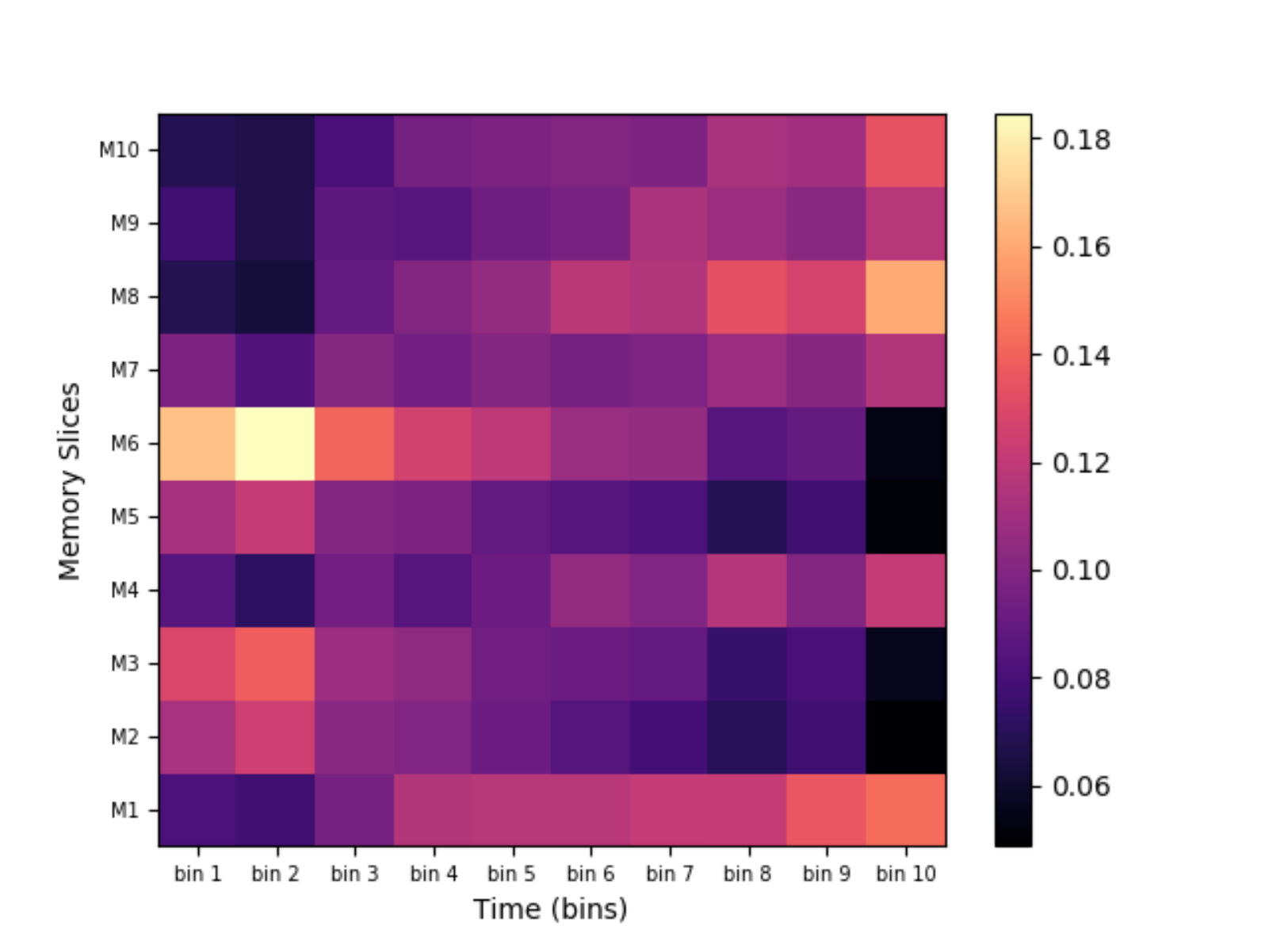}
\caption{Attention weights over LRAM for user-item pairs for different time bins on MovieLens1M. A clear trend is found in M6 which shows that the LRAM module encodes temporal information even when no such information is provided during training. \textit{(Best viewed in color.)} } 
\label{fig:time}
\end{center}
\end{figure}

\subsection{RQ5: What do the relation vectors represent? Are they meaningful?}
For each user-item pair in the test set, we generated the latent relation vector $r$. Next, we computed the cosine similarity between
the relation vectors of all user-item pairs and selected the user-item pairs in which the cosine similarity between their relation vectors
is the highest. Intuitively, this is to investigate \textbf{if similar user-item pairs might have similar relation vectors}. In order to characterize user-item pairs, we selected attributes that are available in the MovieLens1M dataset. The user attributes provided include Age, Job and Gender while only category and movie title were provided for the items. Once again, note that these attributes were not provided to our model during training. For each user-item pair, we computed the attribute matches with respect to the user-item pair with the \textbf{closest} relation vector, e.g., if $<$User1,Item1$>$ and $<$User2,Item2$>$ have the most similar relation vector, we compute the matches between \textbf{each} attribute of both user-item pairs. For example, we check for matches between User1 and User2 within the list of attributes such as user age, user gender and user job and item category. Ideally, the model should learn a similar relation vector for similar user-item pairs. In order to determine if the result is significant, we computed the probability of a match by random chance taking into consideration the distribution of attributes. Table \ref{tab:relation_match} reports the results of this experiment. 

\begin{table}[htbp]
  \centering
 \small
    \begin{tabular}{c|c|c|c}
    \midrule
    User-Item Attribute & Match ($\%$)  & Random ($\%$) & Diff (\%)\\
    \midrule
    User Age   &   25.81    & 22.17 &  3.64\\
    User Job   &  20.06     & 13.71 & 6.35\\
    User Gender & 65.73      & 59.43 & 6.30\\
    Item Category & 51.19      & 43.87 & 7.32 \\
    Category AND Job &  15.07    & 5.56 & 9.51 \\
    \midrule
    \end{tabular}%
     \caption{Matches between user-item attributes of user-item pairs with the closest relation vector. Relation vectors encode user-item attributes without being trained on them.}
  \label{tab:relation_match}%
\end{table}%
We observe that the percentage of getting an attribute match is often higher than that of random chance which might signal that \textit{similar user-item pairs have similar relation vectors}. In particular, the item category (movie genre) has the most prominent improvement over random chance ($7.32\%$) individually while a considerable percentage of user-item pairs (15.07\%) have an exact match of item category \textbf{and} job. This is $9.51\%$ more than random chance. Additionally, we also found that (by manual inspection) there is a prominent number of job-category matches such as \textit{(programmer, thriller)} and \textit{(technician/engineer, thriller)}. This is intuitive since engineers and programmers can be considered as semantically related professions. 

Overall, we believe that, the user and movie attributes characterize the behavior of users and therefore, there might be a hidden structure within simple implicit interaction data. By imposing and inducing architectural bias, our model learns to capture this fine-grained behavior even from simple implicit feedback data.

\section{Conclusion}
In this paper, we proposed \textsc{LRML} (Latent Relational Metric Learning), a novel attention-based memory-augmented neural architecture that models
the relationship between users and items in metric space using latent relation vectors. \textsc{LRML} demonstrates the state-of-the-art performance on 10 publicly available benchmark datasets for implicit collaborative ranking. Empirical results show that relative improvement is significantly greater when the dataset is large, e.g., Netflix Prize and MovieLens20M, which is due to the geometric inflexibility of the CML algorithm. Additionally, \textsc{LRML} leverages the hidden and latent relational structure in the implicit user-item interaction matrix. Via qualitative analysis of the attention weights, we discovered that explicit rating information, temporal information and even item attributes are encoded within the LRAM module and relation vectors even when these information are not provided during training. 

\section{Acknowledgements}
The authors would like to thank anonymous reviewers of WWW 2018 for their time and effort in reviewing this paper. We also thank Matt Yang and Kang KyungPhil for feedback on some typo errors in previous preprint versions.

\bibliographystyle{ACM-Reference-Format}

\bibliography{references}

\end{document}